\pgfplotsset{compat=1.14}
\newcommand \nn {\mathcal{N}}
\newcommand \ds {\mathcal{D}}
\newcommand {\inp} [1] {D_{#1}}
\newcommand {\out} [1] {Y_{#1}}
\newcommand \mia {\mathcal{M}}
\newcommand \gt {X}
\newcommand {\gti} [1] {X_{#1}}
\newcommand {\ai} [1] {A_{#1}}
\newcommand{\error}{\xi}
\newcommand{\ea}{E_\alpha}
\newcommand{\mi}[1]{\mathbb{I}(#1)}
\newcommand{\en}[1]{\mathbb{H}(#1)}
\newcommand{\pa}{p_{\alpha}}
\newcommand{\binomialseries}{\log{\left( {\scriptstyle \binom{|\ds{}|}{0} + \dots + \binom{|\ds{}|}{\alpha} } \right)}}
\newtheorem{theorem}{Theorem}
\newtheorem{lemma}{Lemma}
\newtheorem{example}{Example}
\newcommand{\comment}[1]{}
\title{An Extension of Fano's Inequality for Characterizing  Model Susceptibility to Membership Inference Attacks}
\author{
    Sumit Kumar Jha \textsuperscript{\rm 1}, Susmit Jha \textsuperscript{\rm 2}, Rickard Ewetz \textsuperscript{\rm 3}, Sunny Raj \textsuperscript{\rm 4}, Alvaro Velasquez \textsuperscript{\rm 5}\\ Laura L. Pullum \textsuperscript{\rm 6}, Ananthram Swami\textsuperscript{\rm 7}\\
}
\begin{document}

\maketitle

\begin{abstract}
Deep neural networks have been  shown to be vulnerable to membership inference attacks wherein the attacker aims to detect whether specific input data were used to train the model. These attacks can potentially leak private or proprietary data. %
We present a new extension of Fano's inequality and employ it to theoretically establish that the probability of success for a membership inference attack on a deep neural network can be bounded using the mutual information between its inputs and its activations and/or outputs. This enables the use of mutual information to measure the susceptibility of a DNN model to membership inference attacks. %
In our empirical evaluation, we show that the correlation between the  mutual information and the susceptibility of the DNN model to membership inference attacks is 0.966, 0.996, and 0.955 for CIFAR-10, SVHN and GTSRB models, respectively. 

\end{abstract}

\section{Introduction}

Deep neural network (DNN) models have achieved remarkable accuracy levels on tasks such as image classification, activity recognition, speech translation, autonomous driving, and medical diagnosis. This has fueled the emergence of a market for DNN models that could be trained on proprietary or private data, and then made available to the users either directly or as a service over cloud platforms. 
Recently, it has been shown that black-box access to a DNN model can be used to detect 
whether a specific data item is a member of the training data set. Such membership inference attacks (MIA) pose a significant security and privacy risk. 

The ``Dalenius desideratum''~\cite{dwork2011firm} was first proposed in the literature on statistical disclosure control and attempts to characterize this notion of expected privacy for training data. It states that the model should reveal no more about the input to which it is applied than would have been known about this input without applying the model. Another closely related notion of privacy considers the leak in the values of sensitive protected attributes of an input by using the model's output~\cite{fredrikson2014privacy}.
But such absolute notions of privacy for all training inputs cannot be achieved by any useful model~\cite{dwork2010difficulties}. A membership inference attack using the neural network's top layer output was shown in~\cite{shokri2015privacy}, and a recent improvement, by incorporating activation and gradient output of layers, was proposed in~\cite{nasr2019comprehensive}.
Techniques such as those employing differential privacy during model training have also been shown to be not immune to privacy attacks without deterioration of the model's accuracy~\cite{rahman2018membership}. 
A useful model must preserve some information of the training data to make accurate predictions. The literature on generalization in deep learning~\cite{zhang2016understanding,neyshabur2017exploring} studies a closely related problem of understanding whether the model has memorized training data or distilled a generalized model from it. Some theories of generalization in deep learning connect it to the mutual information between the input and output of the model~\cite{shwartz2017opening,xu2017information}. 
We make the following  contributions in this paper:
\begin{itemize}
 \item Fano's inequality establishes an information theoretic relationship between the average information lost in a noisy channel and the probability of the categorization error~\cite{fano1961transmission}. We extend Fano's inequality to   establish that the probability of success for a  membership inference attack on a deep neural network can be bounded by an expression that depends on the mutual information between its inputs and its activations and/or outputs.

\item Inspired by our theoretical results, we use the mutual information between the input and the outputs/activations of a DNN model as a metric for computing its susceptibility to membership inference attacks (MIA). Our  evaluation over a set of deep learning benchmarks and membership attack~\cite{shokri2015privacy, nasr2019comprehensive} methods demonstrates that mutual information strongly correlates with the success probability of membership inference attacks. Our experimental results show that the correlation between the  mutual information and MIA susceptibility  is 0.966, 0.996, and 0.955 for CIFAR-10, SVHN and GTSRB data sets.

\end{itemize}

\begin{figure*}[htbp]
   \centering
   \includegraphics[width=15.2cm]{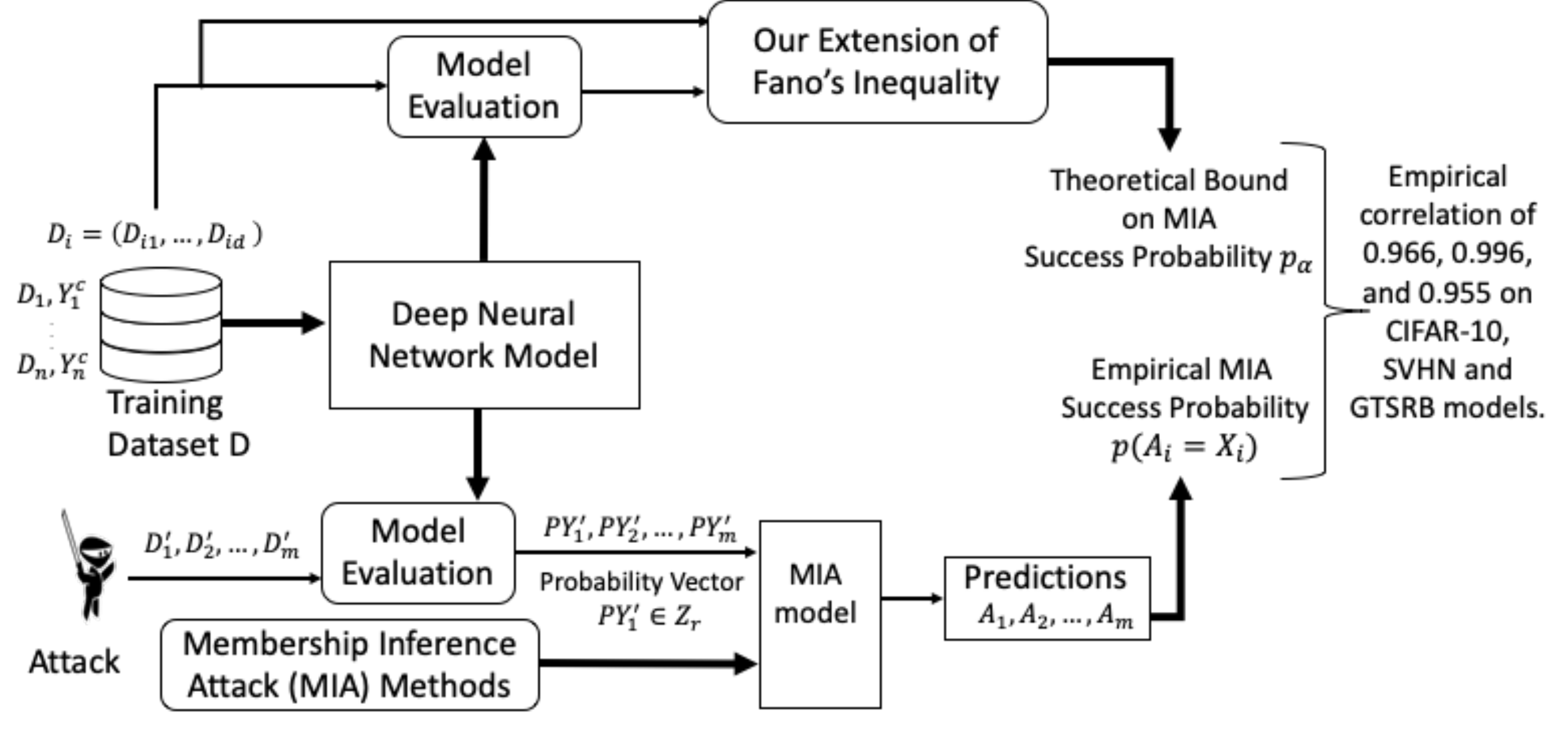} %
   \caption{The  training of a DNN model 
   $\nn{}$ uses the dataset $\inp{}$ with $n$ $d$-dimensional inputs 
   $\inp{i}$ and corresponding labels $\out{i}^c$. 
   An MIA attack method relies on feeding $m$ inputs $\inp{i}'$ 
   from  $ \ds{} \supseteq \inp{}$ to the 
   trained DNN model $\nn{}$ to obtain its probabilistic 
   predictions/activations. This 
   is, in turn, fed to the MIA
   model which makes a prediction $\ai{i}$ of whether 
   the data input $\inp{i}'$ is present in $\inp{}$. 
   The ground truth of whether $\inp{i}'$ is present in $\inp{}$ is denoted by $\gti{i}$. The output/activations are denoted by $Y$. The empirical MIA success 
   probability is computed from the predictions of the MIA model on whether the attacker-provided input data $\inp{i}'$  belong to the training set  $\inp{}$. 
   Our derived bound on the attack success probability is computed by 
   estimating the mutual information between the input 
   and the activation and gradient output of the top layers the DNN model $\nn{}$. Very high correlation demonstrates practical utility of our theoretical bound.}
   \label{fig:problem_def}
\end{figure*}

\section{Result Summary}

\textbf{MIA Attacker Model:} We consider an adversary mounting a membership inference attack against a DNN model  $\nn{}$ where the adversary can have black-box~\cite{ssss17} or white-box~\cite{nasr2019comprehensive} access to
the target DNN model. It can issue arbitrary queries $\inp{i}'$ and
retrieve the model's prediction $\out{i}$.
$\ds{} \supseteq \inp{}$ is the population from which the training dataset $\inp{}$ is drawn. The adversary can obtain the model output $Y$, which could be the softmax output in a black-box setting and include the activation and gradient output of top layers in a white-box environment. The adversary can access a set
of input data that are drawn independently from that population.
The attacker's inputs $\inp{}'$ might contain only elements of interest to the attacker for which it wants to infer whether these were used in training the model $\nn{}$. 
The adversary has no other information about whether these input data are present in the training set.

\noindent\textbf{Susceptibility of a model to MIA Attack:} Given a specific input $\inp{i}$ from a data set $\ds{}$ and a neural network $\nn{}$ learned from  the training data  $\inp{} \subseteq \ds{}$, an MIA attack $\mia{}$  determines whether $\inp{i} \in \inp{}$, i.e. the input $\inp{i}$ is present in the training data set $\inp{}$. Let $\gt\ = (\gti{0}, \dots, \gti{m})$ be a random variable that indicates the ground truth whether the attack inputs are present in the training data set $\inp{}$. Here, $\gti{i}=1$ if the training data set $\inp{}$ contains the corresponding data $\inp{i}$; otherwise $\gti{i}=0$.
  $\ai{} = (\ai{0}, \dots, \ai{m})$ denotes a random variable describing whether an MIA algorithm labels the data $\inp{i}$ as being present in the training set for model $\nn$. $\ai{i}=1$ if the MIA algorithm predicts that the input $\inp{i}$ has been used for training; otherwise, $\ai{i}=0$. In this paper, we seek to answer the following questions: Can we establish a theoretical lower bound on the robustness of a DNN model against MIA attacks by analyzing the mutual information of its inputs and outputs?

\noindent \textbf{Key Observation: } As shown in Section~\ref{sec:exp}, the observed correlations between  the MIA success probability and the mutual information metric are $0.966$, $0.996$, and $0.955$ for CIFAR-10, GTSRB, and SVHN data sets.
The fact that 
these correlations are close to unity    suggests that we can compute the mutual information between input and output/activations of a model to estimate its susceptibility to MIA attacks.

\noindent \textbf{Extension of Fano's Inequality:} Given a DNN model $\nn{}$, the success probability $p_{\alpha}$ of a membership inference attack algorithm that considers all inputs from a data set $\ds{}$ %
making more than $\alpha$ prediction errors is
$$\pa  \geq   \frac{\en{\inp{}} - \mi{\inp{} ; \out{} } - 1 - \log{\left( {\scriptstyle \binom{|\ds{}|}{0} + \dots \binom{|\ds{}|}{\alpha} } \right)}}{|\ds{}| - \log{\left( {\scriptstyle \binom{|\ds{}|}{0} + \dots \binom{|\ds{}|}{\alpha} } \right)}}$$

Here, $\en{\inp{}}$ is the entropy of the training data set $\inp{}$, $|\ds|$ denotes the size of the total data set available to the MIA algorithm, and $\mi{\inp{};\out{}}$ denotes the mutual information between the training data $\inp{}$ and the neural network outputs/activations $\out{}$.
Since typically DNNs are deterministic functions, we add small noise to the DNN weights to compute this mutual information~\cite{achille2019information}. Only the mutual information term $\mi{\inp{};\out{}}$ depends on the DNN model, and hence, we can compute $\mi{\inp{};\out{}}$ to determine the robustness of the model - the higher the $\mi{\inp{};\out{}}$, the lower the robustness to MIA attacks. %

\section{Theoretical Bounds on MIA Success using Extension of Fano's Inequality}

The supervised training of a DNN model $\nn{}$ uses the dataset $\inp{}$ with inputs 
   $\inp{i}$ and corresponding labels $\out{i}^c$. The probabilistic output of 
   the  DNN model on the data set $\inp{}$ is denoted by $\out{}$. A MIA method relies on feeding inputs $\inp{i}'$ 
   from a data set $ \ds{} \supseteq \inp{}$ to the 
   trained DNN model to obtain its probabilistic 
   prediction (softmax layer output) $\out{i}$. This 
   is, in turn, fed to the MIA
   model which makes a prediction $\ai{i}$ of whether 
   the data input $\inp{i}'$ is present in $\inp{}$. 
   The ground truth of whether $\inp{i}'$ is present in $\inp{}$ is denoted by $\gti{i}$.
 The error $\error$ of the attack model is given by $\error{} =  \sum_i \mathds{1}(X_i \neq A_i)$, where $\mathds{1}$ is the indicator function. For a threshold $\alpha$, we can define an indicator random variable $\ea$ that has the value $1$ when $\error{} > \alpha$  and $0$ otherwise.  
 We use the notation $\pa = Pr(\ea=1)$ to denote the probability of the event $\ea$. 

\subsection{Fano's Inequality}
We briefly recall a classical result from information theory, Fano's inequality, that  establishes a relationship between the average information lost in a noisy channel and the probability of the categorization error~\cite{fano1961transmission}. 

Let $X_F$ represent the input to a noisy channel being analyzed by Fano's inequality, and $Y_F$ represent the corresponding output on this channel. Further, let $P(x_F,y_F)$ denote the joint probability of the input and the output for this noisy channel.

Suppose the random variable 
$e_F$ represents the occurrence of an error in the noisy channel, i.e., the approximate recovered signal  $\tilde{X}_F=f(Y_F)$ is not the same as the input signal. Formally, $e_F$ corresponds to the event 
$X_F \neq {\tilde  {X_F}}$. We denote the support of the random variable $X_F$ by the notation $\mathcal  {X}_F$.

Fano's inequality establishes a fundamental information-theoretic relationship between the conditional information $\en{ X_F|Y_F}$ and the probability of error $P(e_F)$ in a noisy channel:
\begin{align}
   \en{ X_F|Y_F} \; \leq \;  \en{e_F} + P(e_F) \log(|{\mathcal  {X}_F}|-1)  
\end{align}

\noindent Our mathematical results in this paper are an extension of Fano's inequality that relate the probability of error of a membership inference attack with the mutual information between the inputs and outputs/activations of a neural network.

\subsection{Extension of Fano's Inequality to MIA Success}

We theoretically establish a relationship between the probability of a MIA model making $\alpha$ prediction errors on a neural network $\nn{}$ and the mutual information $\mi{\inp{};\out{}}$ between the inputs $\inp{}$ and the outputs/activations $\out{}$ of the neural network $\nn{}$. Our proof procedure first establishes two lemmas on the conditional entropy $\en{\ea{}, \gti{} | \ai{}}$, and then uses these results to prove a theorem relating MIA prediction errors with the mutual information $\mi{\inp{};\out{}}$. Our proof of the bound on $p_\alpha$ is applicable to any classifier with input $\inp{}$ and output $\out{}$, not just to a neural network.

\begin{lemma}
\label{lemma1}
\begin{align*}
\en{ \ea, \gti{} | \ai{} } = \en{ \gti{} | \ai{} } 
\end{align*}
\end{lemma}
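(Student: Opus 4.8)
The plan is to exploit the fact that the error-threshold indicator $\ea$ is completely determined once the ground truth $\gti{}$ and the attack predictions $\ai{}$ are both fixed; consequently, adjoining $\ea$ to $\gti{}$ inside the conditional entropy adds no uncertainty beyond what $\gti{}$ already carries.

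First I would invoke the chain rule for conditional joint entropy, peeling off $\ea$ last:
\begin{align*}
\en{\ea, \gti{} \mid \ai{}} = \en{\gti{} \mid \ai{}} + \en{\ea \mid \gti{}, \ai{}}.
\end{align*}
The target identity then reduces to showing that the residual term $\en{\ea \mid \gti{}, \ai{}}$ vanishes.

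Next I would argue that $\ea$ is a deterministic function of the pair $(\gti{}, \ai{})$. By definition the error count $\error{} = \sum_i \mathds{1}(\gti{i} \neq \ai{i})$ is a fixed function of the coordinates $\gti{i}$ and $\ai{i}$, and $\ea = \mathds{1}(\error{} > \alpha)$ is in turn a fixed function of $\error{}$. Hence, conditioned on any realization of $(\gti{}, \ai{})$, the variable $\ea$ takes a single value with probability one, its conditional distribution is degenerate, and therefore $\en{\ea \mid \gti{}, \ai{}} = 0$. Substituting this into the chain-rule expansion yields $\en{\ea, \gti{} \mid \ai{}} = \en{\gti{} \mid \ai{}}$, as claimed.

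There is no deep obstacle here; the only step demanding care is the ordering of the chain rule. I must expand so that $\ea$ --- the deterministically-determined variable --- is the term conditioned on the full pair $(\gti{}, \ai{})$. Expanding in the opposite order would instead isolate $\en{\gti{} \mid \ea, \ai{}}$, which is not manifestly zero and would fail to close the argument. Keeping $\ea$ as the final term is precisely what makes the deterministic-function observation apply directly.
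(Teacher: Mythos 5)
Your proof is correct and follows exactly the same route as the paper's: a chain-rule expansion $\en{\ea, \gti{} \mid \ai{}} = \en{\gti{} \mid \ai{}} + \en{\ea \mid \gti{}, \ai{}}$ followed by the observation that $\ea$ is a deterministic function of $(\gti{}, \ai{})$, so the residual conditional entropy vanishes. Your added remark about the ordering of the chain rule is a sound point of care, though not one the paper needed to belabor.
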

\begin{proof}
Since the error $\ea$ is deterministically known given $\gti{}$ and $\ai{}$, the entropy $\en{ \ea{} | \gti{} , \ai{} } = 0$. We can evaluate $\en{ \ea, \gti{} | \ai{} }$ using the chain rule of conditional entropy. 
\begin{align}
\en{ \ea{}, \gti{} | \ai{} } & = \en{ \gti{} | \ai{} } + \en{ \ea{} | \gti{} , \ai{} } \\
\nonumber & = \en{ \gti{} | \ai{} } + 0 \\
& = \en{ \gti{} | \ai{} }  
\label{EqnAppendix:H1}
\end{align}
\end{proof}
\begin{lemma}
\label{lemma2}
\begin{align*}
\en{ \ea{}, \gti{} | \ai{} }  \leq  1 +  (1-\pa{}) \binomialseries{}  + \pa{} |\ds{}|
\end{align*}
\end{lemma}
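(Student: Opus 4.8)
The plan is to apply the chain rule of conditional entropy, but in the order opposite to the one used in Lemma~\ref{lemma1}: instead of peeling off $\gti{}$ first, I would peel off the indicator $\ea$ first, writing
\begin{align*}
\en{\ea, \gti{} | \ai{}} = \en{\ea | \ai{}} + \en{\gti{} | \ea, \ai{}}.
\end{align*}
The first term is disposed of immediately: since $\ea$ is a binary random variable, conditioning cannot increase its entropy and the entropy of a single bit is at most one, so $\en{\ea | \ai{}} \leq \en{\ea} \leq 1$. This accounts for the additive constant $1$ in the target bound.

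The substance of the argument is in bounding $\en{\gti{} | \ea, \ai{}}$, which I would expand by averaging over the two possible values of $\ea$:
\begin{align*}
\en{\gti{} | \ea, \ai{}} = (1-\pa{})\,\en{\gti{} | \ea=0, \ai{}} + \pa{}\,\en{\gti{} | \ea=1, \ai{}}.
\end{align*}
On the event $\ea = 0$ the attack makes at most $\alpha$ prediction errors, so the ground-truth membership vector $\gti{}$ differs from the prediction $\ai{}$ in at most $\alpha$ coordinates; given $\ai{}$, the vector $\gti{}$ is therefore confined to a Hamming ball of radius $\alpha$ centred at $\ai{}$. Counting the lattice points in that ball bounds the conditional support by $\binom{|\ds{}|}{0} + \dots + \binom{|\ds{}|}{\alpha}$, and since the uniform distribution maximises entropy over a fixed support, $\en{\gti{} | \ea=0, \ai{}} \leq \binomialseries{}$. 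On the complementary event $\ea = 1$ I would use only the crude fact that $\gti{}$ is a binary vector over the $|\ds{}|$ candidate inputs, giving at most $2^{|\ds{}|}$ possible values and hence $\en{\gti{} | \ea=1, \ai{}} \leq |\ds{}|$. Substituting these two bounds, weighted by $1-\pa{}$ and $\pa{}$ respectively, together with the constant from the first term, yields the claimed inequality.

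The step I expect to demand the most care is the counting argument on the event $\ea = 0$: making precise that conditioning jointly on $\ai{}$ and on the event ``at most $\alpha$ coordinates are flipped'' pins $\gti{}$ inside a Hamming ball, so the governing support size is the volume of that ball, $\binom{|\ds{}|}{0} + \dots + \binom{|\ds{}|}{\alpha}$, rather than the full $2^{|\ds{}|}$. This is exactly where the argument departs from classical Fano's inequality, which uses a single error event and the support bound $|\mathcal{X}_F|-1$; here the threshold $\alpha$ on the number of errors replaces that quantity with the cardinality of a Hamming ball, and it is precisely this substitution that will surface the binomial-sum term in the final theorem relating $\pa{}$ to the mutual information $\mi{\inp{};\out{}}$.
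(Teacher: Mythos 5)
Your proposal is correct and follows essentially the same route as the paper's own proof: the identical chain-rule split $\en{\ea, \gti{} | \ai{}} = \en{\ea | \ai{}} + \en{\gti{} | \ea, \ai{}}$, the bound $\en{\ea | \ai{}} \leq 1$ for the binary indicator, the case split on $\ea$ weighted by $1-\pa{}$ and $\pa{}$, the Hamming-ball counting argument giving the binomial-sum bound when $\ea = 0$, and the crude $2^{|\ds{}|}$ support bound when $\ea = 1$. No gaps; the step you flagged as delicate is exactly the one the paper also treats as the crux.
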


\begin{proof}
We perform an expansion for $\en{ \ea, \gti{} | \ai{} }$ using the chain rule of conditional entropy:
\begin{align}
\en{ \ea{}, \gti{} | \ai{} } & = \en{ \ea{} | \ai{} } + \en{ \gti{} | \ea{} , \ai{} }  \label{EqnAppendix:H2}
\end{align}
\noindent  Now, we know that $ \en{ \ea | \ai{} } \leq  \en{ \ea }$ as conditional entropy is no more than an unconditional entropy. Further, since $\ea$ is a binary valued random variable, $\en{\ea} \leq 1$ by the definition of entropy. Thus, we can write Eqn.~\ref{EqnAppendix:H2} as follows:
\begin{align}
\en{\ea, \gti{} | \ai{} } & \leq  1 + \en{ \gti{} | \ea , \ai{} }  
\label{EqnAppendix:H2a}
\end{align}
\noindent We can expand the second term $\en{ \gti{} | \ea , \ai{} }$ by splitting $\ea$ into two cases i.e. $\ea=0$ and $\ea=1$:
\begin{align}
\en{ \gti{} | \ea , \ai{} } & =  Pr(\ea=0) \en{ \gti{} | \ea =0 , \ai{} } \nonumber \\ & + Pr(\ea=1) \en{ \gti{} | \ea =1 , \ai{} }  
\label{EqnAppendix:H3}
\end{align}
\noindent We can simplify the above expression by obtaining bounds on the quantity $\en{ \gti{} | \ea =0 , \ai{} } $.  If $\ea=0$, the random variable $\gti{}$ can only differ from the random variable $\ai{}$ in at most $\alpha$ positions. 
Thus, given a particular value of the random variable $\ai{}$, the random variable $\gti{}$ can only take at most ${ { \binom{|\ds{}|}{0} + \dots + \binom{|\ds{}|}{\alpha} } }$ 
= $V(\alpha)$ values.  The highest entropy is achieved when all these values are equally likely i.e. $\en{ \gti{} | \ea =0 , \ai{} }  \leq - \sum_{j=1}^{V(\alpha)} \frac{1}{V(\alpha)} \log \frac{1}{V(\alpha)} =  - \log \frac{1}{V(\alpha)}  \sum_{j=1}^{V(\alpha)} \frac{1}{V(\alpha)}    = - \log \frac{1}{V(\alpha)} = \log{V(\alpha)}$.
Hence, Eqn.~\ref{EqnAppendix:H3} can be rewritten as:
\begin{align}
\en{ \gti{} | \ea{} , \ai{} } & \leq  (1-\pa{}) \binomialseries{} \nonumber \\& + \pa{} \; \en{ \gti{} | \ea{} =1 , \ai{} }  
\label{EqnAppendix:H3a}
\end{align}
\noindent In the above equation, we have used $\pa$ as a shorthand to represent the probability $Pr(\ea=1)$. Since $X$ can take at most $2^{|\ds|}$ different values, the term $\en{ \gti{} | \ea =1 , \ai{} }$ on the right can be upper bounded by $\log{2^{|\ds|}}$ = $|\ds|$ using the definition of entropy. Thus, Eqn.~\ref{EqnAppendix:H3a} can be simplified as:
\begin{align}
\en{ \gti{} | \ea{} , \ai{} } & \leq  (1-\pa{}) \binomialseries{} + \pa{} |\ds{}|
\label{EqnAppendix:H3b}
\end{align}
\noindent Putting together equations~\ref{EqnAppendix:H2a} and~\ref{EqnAppendix:H3b}, we get the following:
\begin{align}
\en{ \ea, \gti{} | \ai{} } & \leq  1 + (1-\pa) \log{\left( {\scriptstyle \binom{|\ds{}|}{0} + \dots \binom{|\ds{}|}{\alpha} } \right)}  + \pa |\ds{}|
\label{EqnAppendix:H4}
\end{align}
\end{proof}

\begin{theorem}
\label{mainthem}
Given a neural network $\nn{}$ and a MIA model that  considers all inputs from a data set $\ds{}$ and only observes the outputs/activations $\out{}$ of the neural network $\nn{}$, the probability of such a MIA model making more than $\alpha$ prediction errors is
$$\pa  \geq   \frac{\en{\inp{}} - \mi{\inp{} ; \out{} } - 1 - \log{\left( {\scriptstyle \binom{|\ds{}|}{0} + \dots \binom{|\ds{}|}{\alpha} } \right)}}{|\ds{}| - \log{\left( {\scriptstyle \binom{|\ds{}|}{0} + \dots \binom{|\ds{}|}{\alpha} } \right)}}$$

\noindent Here, $\en{\inp{}}$ is the entropy of the training data set $\inp{}$, $|\ds|$ denotes the size of the total data set available to the MIA, and $\mi{\inp{};\out{}}$ denotes the mutual information between the training data $\inp{}$ and the  outputs/activations $\out{}$ of the neural network.

\end{theorem}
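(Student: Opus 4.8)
The plan is to chain the two lemmas and then convert the resulting bound on a conditional entropy into one that involves the mutual information $\mi{\inp{};\out{}}$. First I would combine Lemma~\ref{lemma1} and Lemma~\ref{lemma2}: since $\en{\ea, \gti{}\,|\,\ai{}} = \en{\gti{}\,|\,\ai{}}$ while the same quantity is bounded above by $1 + (1-\pa)\binomialseries{} + \pa|\ds{}|$, I immediately obtain
\begin{align*}
\en{\gti{}\,|\,\ai{}} \;\leq\; 1 + (1-\pa)\binomialseries{} + \pa|\ds{}|.
\end{align*}
This isolates $\pa$ on the right-hand side while leaving only a conditional entropy of the membership ground truth on the left.

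Next I would bring in the mutual information. The crucial structural fact is that the attack prediction $\ai{}$ is computed solely from the model outputs/activations $\out{}$, which are themselves produced from the data, so the variables form a Markov chain $\inp{} \to \out{} \to \ai{}$. Because the MIA considers every input of the population $\ds{}$, the membership ground-truth vector $\gti{}$ and the training set $\inp{}$ determine one another, which gives $\en{\gti{}} = \en{\inp{}}$ and $\mi{\gti{};\out{}} = \mi{\inp{};\out{}}$. Applying the data processing inequality along the chain yields $\mi{\gti{};\ai{}} \leq \mi{\inp{};\out{}}$, and therefore
\begin{align*}
\en{\gti{}\,|\,\ai{}} = \en{\gti{}} - \mi{\gti{};\ai{}} \;\geq\; \en{\inp{}} - \mi{\inp{};\out{}}.
\end{align*}

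Combining the upper and lower bounds on $\en{\gti{}\,|\,\ai{}}$ produces $\en{\inp{}} - \mi{\inp{};\out{}} \leq 1 + (1-\pa)\binomialseries{} + \pa|\ds{}|$. The remaining step is routine algebra: collecting the $\pa$ terms yields a factor of $|\ds{}| - \binomialseries{}$, and dividing by this quantity — which is positive whenever $\alpha < |\ds{}|$, since $\binom{|\ds{}|}{0} + \dots + \binom{|\ds{}|}{\alpha} < 2^{|\ds{}|}$ — rearranges into the stated lower bound on $\pa$.

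The main obstacle is the information-theoretic heart of the argument rather than the algebra: I must argue cleanly that $\ai{}$ depends on the data only through $\out{}$ so that the Markov chain $\inp{} \to \out{} \to \ai{}$ genuinely holds, and that identifying the membership indicator $\gti{}$ with the training set $\inp{}$ is legitimate, so that $\en{\gti{}} = \en{\inp{}}$ and the data processing inequality can be phrased in terms of $\mi{\inp{};\out{}}$. Once that modeling assumption is pinned down, the data processing inequality does the essential work and the bound follows.
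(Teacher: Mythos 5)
Your proposal is correct and follows essentially the same route as the paper's proof: chain Lemma~\ref{lemma1} and Lemma~\ref{lemma2}, use the fact that $\gti{}$ and $\inp{}$ determine one another (the paper encodes this as $\en{\gti{}|\inp{},\ai{}}=0$ and $\en{\inp{}|\gti{},\ai{}}=0$), and apply the data processing inequality along $\inp{} \to \out{} \to \ai{}$. Your bookkeeping via $\en{\gti{}|\ai{}} = \en{\gti{}} - \mi{\gti{};\ai{}}$ is a marginally cleaner phrasing of the paper's chain-rule manipulations with $\en{\inp{}|\ai{}}$, but the mathematical substance is identical.
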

\begin{proof}

\noindent Putting together the results from Lemma~\ref{lemma1} and Lemma~\ref{lemma2}, we obtain the following:
\begin{align}
& \en{ \gti{} | \ai{} }  \leq  1 + (1-\pa{}) \binomialseries{}  + \pa |\ds{}| \nonumber \\ 
\label{Eqn:pa4x} \implies & \pa  \geq   \frac{\en{ \gti{} | \ai{} } - 1 - \binomialseries{}}{|\ds{}| - \binomialseries{}}
\end{align}
Note that $\gti{}$ is determined given the training data $\inp{}$ used to train the neural network $\nn{}$; hence, $\en{\gti{}|\inp{},\ai{}}=0$. Thus, using the chain rule of conditional entropy, we get $\en{\inp{}, \gti{} | \ai{}} = \en{\inp{} | \ai{}} + \en{\gti{}|\inp{},\ai{}} = \en{\inp{} | \ai{}} + 0 = \en{\inp{} | \ai{}}$
Also, repeating the chain rule of conditional entropy, we  get $\en{\inp{}, \gti{}| \ai{}} = \en{\gti{}|\ai{}} + \en{\inp{}|\gti{},\ai{}}$. Combining these two results, we obtain the following: 
$\en{\gti{}|\ai{}} = \en{\inp{} | \ai{}} - \en{\inp{} | \gti{}, \ai{}}$.
Putting this together with Eqn.~\ref{Eqn:pa4x}, we obtain the following:

\resizebox{1.01  \linewidth}{!}{
  \begin{minipage}{1.0\linewidth}
 \begin{align}
\pa  & \geq   \frac{\en{ \inp{} | \ai{}} - \en{ \inp{} | \gti{}, \ai{} } - 1 - \binomialseries{} }{|\ds{}| - \binomialseries{}}  \nonumber \\
&  \hspace{-0.4cm} \geq   \frac{\en{\inp{}} - \mi{\inp{} ; \ai{} } - \en{ \inp{} | \gti{}, \ai{} }- 1 - \binomialseries{}}{|\ds{}| - \log{\left( {\scriptstyle \binom{|\ds{}|}{0} + \dots \binom{|\ds{}|}{\alpha} } \right)}}  \nonumber\\
& \hspace{3.3cm} \text{ as } \mi{\inp{};\ai{}} = \en{\inp{}} - \en{\inp{}|\ai{}} \nonumber \\
& \hspace{-0.4cm} \geq   \frac{\en{\inp{}} - \mi{\inp{} ; \ai{} } - 1 - \binomialseries{}}{|\ds{}| - \binomialseries{}} \nonumber \\
& \label{Eqn:H5a} \hspace{3.3cm} \text{since, } \en{\inp{} | \gti{}, \ai{} } = 0
\end{align}
  \end{minipage}
}
Also, since $\out{}$ is obtained from $\inp{}$ by using the neural network $\nn{}$, and the adversarial prediction $\ai{}$ is obtained from the neural network response $\out{}$, the data processing inequality implies that $\mi{\inp{};\ai{}} \leq \mi{\inp{};\out{}}$. Applying these results to Eqn.~\ref{Eqn:H5a}, we get the following:
\begin{align}
& \pa  \geq   \frac{\en{\inp{}} - \mi{\inp{} ; \out{} } - 1 - \binomialseries{}}{|\ds{}| - \binomialseries{}} 
\label{Eqn:H5b}
\end{align}
\end{proof}

The training of a neural network does not influence the entropy of the training data set $\en{\inp{}}$ or the size of the complete data set $\ds{}$ used by the membership inference attack.
Our analysis shows that the probability of a membership inference attack making more than $\alpha$ prediction errors is dependent on the  mutual information $\mi{\inp{};\out{}}$ between the inputs and the outputs/activations of a neural network. 
Thus, the mutual information between the inputs and the outputs/activations of a neural network can be used to characterize its susceptibility to membership inference attacks.  

\begin{example}[Theorem~\ref{mainthem} with $\mi{\inp{};\out{}}=0$,  $\alpha=c$ where $c$ is a constant such that $c<<|\ds{}|$, and $\en{\inp{}}=|\ds{}|$]
\label{example1}
Consider an untrained   neural network such that the mutual information between its input $\inp{}$ and its output $\out{}$ is zero. Further, assume that $\en{\inp{}} = |\ds{}|$.  
Then, Theorem~\ref{mainthem} states that the probability $\pa$ of a membership inference attack making more than $c$ prediction errors is:  
\begin{align*}
    \pa & \geq   \frac{\en{\inp{}} - \mi{\inp{} ; \out{} } - 1 - \log{\left( {\scriptstyle \binom{|\ds{}|}{0}  + \dots + \binom{|\ds{}|}{c} }  \right)} }{|\ds{}| - \log{\left( {\scriptstyle \binom{|\ds{}|}{0}  + \dots + \binom{|\ds{}|}{c} }  \right)} }\\
& \geq   \frac{|\ds{}| - 1 - \log{\left( {\scriptstyle \binom{|\ds{}|}{0}  + \dots + \binom{|\ds{}|}{c} }  \right)} }{|\ds{}| - \log{\left( {\scriptstyle \binom{|\ds{}|}{0}  + \dots + \binom{|\ds{}|}{c} }  \right)} } \\ &
\hspace{2.5cm} \textrm{Since, } \mi{\inp{}; \out{}}=0  \textrm{ and } \en{\inp{}} = |\ds{}|\\
& \geq  1 -  \frac{1}{|\ds{}| - \log{\left( {\scriptstyle \binom{|\ds{}|}{0}  + \dots + \binom{|\ds{}|}{c} }  \right)}} 
\end{align*}

As the data set becomes large i.e. $|\ds{}| \rightarrow \infty$, $p_\alpha \rightarrow 1$ for $\alpha=c << |\ds{}|$ i.e. the membership inference attack will almost surely make at least $c$ prediction errors if   $\mi{\inp{};\out{}}=0$ and $\en{\inp{}}=|\ds{}|$.   
\end{example}

Example~\ref{example1} shows how the probability bound established by Theorem~\ref{mainthem} ties  with our intuition in a specific setting of a poorly trained neural network with $\mi{\inp{};\out{}}=0$. Now, we look at another example of a neural network where $\mi{\inp{},\out{}}=|\ds{}|/c$ for some constant $c>1$.
\begin{example}[Theorem~\ref{mainthem} with $\mi{\inp{};\out{}}=|\ds{}|/c$ for some constant $c>1$, $\alpha=0$, and $\en{\inp{}}=|\ds{}|$]
\label{example2}
Consider a neural network whose mutual information is given by $\mi{\inp{};\out{}}=|\ds{}|/c$. 
Applying Theorem~\ref{mainthem}, the probability of making one or more prediction errors is:  
\begin{align*}
    \pa & \geq   \frac{\en{\inp{}} - \mi{\inp{} ; \out{} } - 1 - \log{\left( {\scriptstyle \binom{|\ds{}|}{0} } \right) }}{|\ds{}| - \log{\left( {\scriptstyle \binom{|\ds{}|}{0} }  \right)} }\\
& \geq   \frac{|\ds{}| - \frac{|\ds{}|}{c} - 1 }{|\ds{}|}  \hspace{0.15cm} \textrm{Since, } \mi{\inp{};\out{}}=\frac{|\ds{}|}{c} \textrm{ and }  \en{\inp{}} = |\ds{}| \\
& \geq  1 -  \frac{1}{c} - \frac{1}{|\ds{}|}
\end{align*}
Thus, according to Theorem~\ref{mainthem}, a membership inference attack may make at least one prediction error with probability $1-\frac{1}{c}$ as $|\ds{}| \rightarrow \infty$  .
\end{example}

\subsubsection{Measuring Mutual Information:} Entropy of any $d$ dimensional random variable $x$ can be computed using a non-parametric estimator~\cite{gao2015efficient} based on $k$-nearest-neighbors (kNN)  with a correction applied for  the local non-uniformity of the underlying joint distribution of the $d$ features. A simple kNN based estimator for entropy from  samples $x^1, x^2,  \ldots, x^n$ is:
    $\en{x} = -\frac{1}{n} \sum_{1}^{n}{
    \log {p_k(x^i)}}$
where the probability density is given by $p_k(x^i) = \frac{k}{n-1} \frac{\Gamma(d/2+1)}{\pi^{d/2}} r_k (x^i)^{-d}$. Here, $r_k(x^i)$ is the distance between $x_i$ and its $k^{th}$ nearest neighbor in the data set.
This can be used to compute the entropy of training data 
$\en{\inp{}}$, $\en{\out{}}$, and $\en{\inp{},\out{}}$.
The empirical estimation of the mutual information between the training inputs and outputs/activations of a DNN model $\mi{\inp{};\out{}}$ is obtained as 
$\mi{\inp{};\out{}} = \en{\inp{}} + \en{\out{}} - \en{\inp{},\out{}}$.

\section{Related Work}

\begin{figure*}[h!]
    \centering
\begin{tikzpicture}[scale=0.8]
\begin{axis}[
    ybar,
    title={\Large CIFAR-10 \normalsize},
    ylabel={\large Attack Success Probability \normalsize},
    symbolic x coords={MI=2.01,MI=1.52,MI=1.24,MI=1.17},
	xlabel=Models,
	xtick=data,
	legend style={at={(0.5,-0.1)},
	anchor=north,legend columns=4},
    enlarge x limits={true,abs value=1cm},
    x = 2cm
]

 \addplot[
    fill=black
    ]
    coordinates {
      (MI=2.01, 0.820943442825389 )
       (MI=1.52, 0.6811014356319134 )
       (MI=1.24, 0.6534785507890642 )
       (MI=1.17, 0.5945189037807561 )
        };
    \addlegendentry{Attack  1 }

 \addplot[
    fill=brown
    ]
    coordinates {
      (MI=2.01, 0.8174857989627068 )
       (MI=1.52, 0.7018121911037891 )
       (MI=1.24, 0.6725939097577239 )
       (MI=1.17, 0.6289257851570315 )
        };
    \addlegendentry{Attack 2  }

 \addplot[
    fill=blue
    ]
    coordinates {
      (MI=2.01, 0.8327982217831563 )
       (MI=1.52, 0.7145210637797129 )
       (MI=1.24, 0.6957101578128473 )
       (MI=1.17, 0.6517303460692139  )
        };
    \addlegendentry{Attack  3 }
\end{axis}
\end{tikzpicture}
\; \; \; \; 
\begin{tikzpicture}[scale=0.8]
\begin{axis}[
    ybar,
    title={\Large GTSRB \normalsize},
    ylabel={\large Attack Success Probability \normalsize},
    symbolic x coords={MI=2.29,MI=1.62,MI=1.32,MI=1.21},
	xlabel=Models,
	xtick=data,
	legend style={at={(0.5,-0.1)},
	anchor=north,legend columns=4},
    enlarge x limits={true,abs value=1cm},
    x = 2cm
]

 \addplot[
    fill=black
    ]
    coordinates {
      (MI=2.29, 0.9397470865360774 )
       (MI=1.62, 0.6881591562799616 )
       (MI=1.32, 0.6132075471698113 )
       (MI=1.21, 0.5747763101832126 )
        };
    \addlegendentry{Attack  1 }

 \addplot[
    fill=brown
    ]
    coordinates {
      (MI=2.29, 0.9843788742871312 )
       (MI=1.62, 0.6421380632790029 )
       (MI=1.32, 0.6318453750575241 )
       (MI=1.21, 0.5125692373242438 )
        };
    \addlegendentry{Attack 2  }

 \addplot[
    fill=blue
    ]
    coordinates {
      (MI=2.29, 0.9833870567815522 )
       (MI=1.62, 0.7209971236816874 )
       (MI=1.32, 0.5883571099861942 )
       (MI=1.21, 0.4629314017895185 )
        };
    \addlegendentry{Attack  3 }
\end{axis}
\end{tikzpicture}

\medskip 

\begin{tikzpicture}[scale=0.8]
\begin{axis}[
    ybar,
    ymin=0,
    title={\Large SVHN \normalsize},
    ylabel={\large Attack Success Probability \normalsize},
    symbolic x coords={MI=1.09,MI=0.88,MI=0.51,MI=0.40},
	xlabel=Models,
	xtick=data,
	legend style={at={(0.5,-0.1)},
	anchor=north,legend columns=4},
    enlarge x limits={true,abs value=1cm},
    x = 2cm
]

 \addplot[
    fill=black
    ]
    coordinates {
      (MI=1.09, 0.9774066797642437 )
       (MI=0.88, 0.8531802965088474 )
       (MI=0.51, 0.5497780596068484 )
       (MI=0.40, 0.4602058319039451 )
        };
    \addlegendentry{Attack  1 }

 \addplot[
    fill=brown
    ]
    coordinates {
      (MI=1.09, 0.9769155206286837 )
       (MI=0.88, 0.9017216642754663 )
       (MI=0.51, 0.6026210103572184 )
       (MI=0.40, 0.4716981132075472 )
        };
    \addlegendentry{Attack 2  }

 \addplot[
    fill=blue
    ]
    coordinates {
      (MI=1.09, 0.981827111984283 )
       (MI=0.88, 0.8981348637015782 )
       (MI=0.51, 0.7059818220249419 )
       (MI=0.40, 0.47787307032590054 )
        };
    \addlegendentry{Attack  3 }

\end{axis}
\end{tikzpicture}
\; \; \; \; \; \; \hspace{1cm}
\begin{tikzpicture}{scale=0.8}
\node (tbl) {
\begin{tabular}{@{}cc@{}}
\toprule 
Data Set    &  Correlation\\
            & MI \& Attack Probability\\
\midrule
 CIFAR-10   & 0.966\\
 GTSRB   & 0.996\\
 SVHN   & 0.955\\
 \bottomrule
\vspace{0.75cm}
\end{tabular}
};

\end{tikzpicture}
    \caption{Mutual information between the inputs and the output layers of a neural network correlates strongly with the success probability of membership inference attack models. The Pearson correlations between mutual information and success probability of a contemporary MIA attack~\cite{shokri2015privacy}  are $0.966$, $0.996$ and $0.955$ for neural networks trained on the CIFAR-10, GTSRB and SVHN data sets, respectively. Our evaluation considers three different variants of the MIA attack.}
    \label{fig:miprob}
\end{figure*}
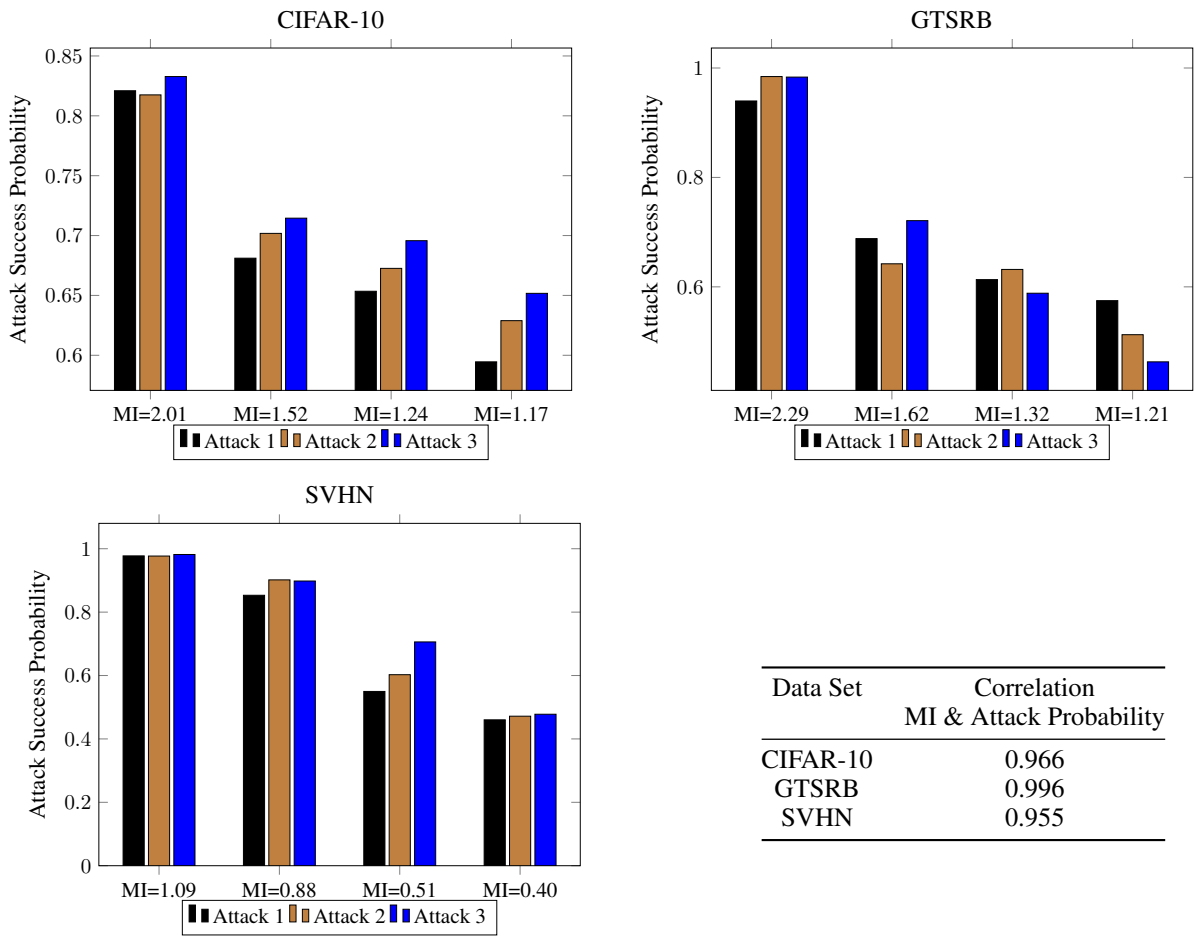

We  survey related work in membership inference attacks and discuss privacy preserving approaches to machine learning. We sketch the relationship between regularization, mutual information and generalization in deep neural networks. 

\subsection{Membership Inference Attacks}
A membership inference attack on neural networks essentially generalizes the well-studied problem of identifying if a specific data record is present in a 
data set given some statistic about this data set~\citep{ssss17,nasr2019comprehensive, jacobs2009new, sankararaman2009genomic}. This is a severe privacy concern. For example, membership in the training data set of a model associated with an addiction or disease can reveal otherwise private information about the patient~\citep{liu2019socinf, ptc17}. A number of MIA methods have been proposed recently in literature. One approach~\citep{ssss17} trains a number of shadow models independently using a subset of the training dataset. 
The final attacker model learns from all these shadow models, and can then predict if a data element was in or out of the target model's training data. Another training-time attack is based on augmenting the training data with additional synthetic inputs whose labels encode information that the model needs to leak~\citep{song2017machine}. No other component of the entire training pipeline is perturbed.  Yet another approach~\cite{melis2019exploiting} exploits the fact that deep neural networks construct multiple internal representations of all kinds of features related to the input data, including those irrelevant to the current task.
These attacks have also been extended to collaborative and federated settings~\cite{melis2019exploiting}.
Robust learning techniques to defend against adversarial attacks have been shown to increase susceptibility to MIA  attacks~\cite{song2019privacy}. Finally, these attacks have also been shown to be largely transferable~\cite{truex2018towards}. These observations further underline the need for addressing MIA  attacks.

\subsection{Privacy Preserving Machine Learning}
Differential privacy is used for privacy-preserving statistical analysis over sensitive data where the privacy and utility trade-off is controlled by a privacy budget parameter. 
Differential privacy can provide formal guarantees that the model trained on a given dataset will produce statistically similar predictions as a model trained on a different dataset that differs by exactly one instance~\cite{dwork2014algorithmic}. 
Differential training privacy has been proposed as a way to measure model susceptibility by computing this worst-case difference among all training data points~\cite{long2017towards}.
These are particularly useful for simple convex machine learning algorithms~\cite{chaudhuri2011differentially,zhang2016differential,jayaraman2018distributed}. But differential private deep learning often requires a large privacy budget~\cite{shokri2015privacy} with ongoing efforts to reduce it~\cite{abadi2016deep,hynes2018efficient}. Differential privacy methods can provide worst-case bounds on the privacy loss, but these do not provide an understanding of privacy attacks in practice.  Membership and attribute inference attacks, on the other hand, provide an empirical lower bound on the privacy loss of training data. 
The relationship between the standard worst-case definition of differential privacy and the average-case mutual-information notion is an active area of study in the security and privacy literature~\cite{cuff2016differential,wang2016relation}. 
 Further, MIA attacks are a restricted form of privacy attacks that do not aim at discovering the training data but only detecting the presence of a given data in the training set. 
In contrast to the differential privacy bounds, we focus entirely on MIA attacks and formulate an information theoretic bound on the probability of such an attack being successful instead of characterizing worst-case privacy leakage. This allows a scalable and practical approach to measure and regulate the average-case susceptibility of DNN models to existing MIA attacks.
In order to make DNN models more robust to privacy attacks, there are broadly two classes of techniques.
The first relies on adding noise directly to the training inputs~\cite{zhang2018privacy}, or to the stochastic gradient
descent~\cite{abadi2016deep} to control the affects of the training data on the model parameters. The second class uses an  aggregation of teacher ensembles~\cite{dwork2018privacy,papernot2018scalable,ptc17},
where privacy is enforced by training each teacher on a separate subset of training data, and relying on the noisy aggregation of the teachers' responses.

\subsection{Generalization and Memorization in DNNs}
A desirable property of any model is having low generalization error, that is, good performance on unseen examples from the population. 
The connection between overfitting and membership inference attacks has also been investigated~\cite{yeom2018privacy}.
Regularization techniques aimed at controlling model complexity have been traditionally used to reduce overfitting and improve generalization. But recent work has demonstrated that these regularization techniques do not reduce the susceptibility to MIA attack~\cite{long2018understanding}.  
In contrast, we use mutual information to characterize susceptibility of DNNs to MIA attacks. 
One explanation of generalization in deep learning states that training initially increases the mutual information between the input and the output of the model, and then decreases the mutual information removing relations irrelevant to the task and improving generalization~\cite{shwartz2017opening}. 
A related effort focuses on the ability of a deep learning model to unintentionally memorize unique or rare sequences in the training data~\citep{carlini2018secret}, and uses it to measure the model's propensity for leaking training data. 
Prior work has shown that deep learning models can be trained to perfectly fit completely random data~\cite{zhang2016understanding} which indicates high memorization capacity of DNNs. Hence, MIA attacks are not an oddity of a particular learning technique or model, but a result of the widely observed memorization in deep learning models. Our approach of characterizing MIA susceptibility of models to these attacks to mutual information is, thus, a first step in a promising direction that connects privacy and generalization of DNNs.

\newpage

\section{Experimental Results}
\label{sec:exp}

\begin{figure*}[htbp]
    \centering
    \includegraphics[width=7.25cm]{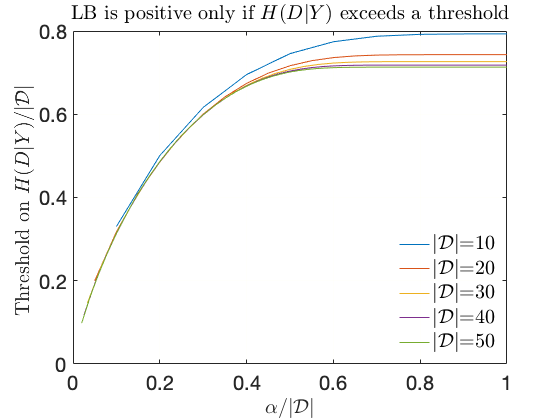}
    \includegraphics[width=7.25cm]{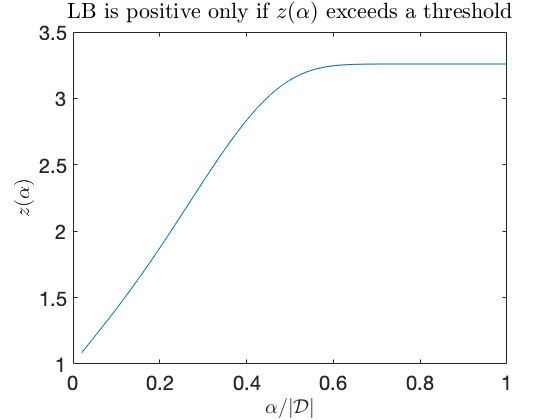}
    \caption{The approximate lower bound (LB) on $p_{\alpha}$ is positive only when $\en{\inp{}|\out{}}$ exceeds a threshold (left) and $z(\alpha) = \binomialseries{}/|\ds{}| $ exceeds another threshold (right).}
    \label{fig:HDY}
\end{figure*}

Our experiments are performed on a system with 128GB RAM, a 16-core AMD processor, and 2 NVIDIA RTX 2080 Ti GPUs running Ubuntu 20.04.
Three popular data sets are used for our investigations: (i) CIFAR-10~\cite{krizhevsky2014cifar} (ii) SVHN~\cite{37648} and (iii) GTSB~\cite{houben2013detection}. 
In our experimental evaluation, we investigate {\it {whether we can use mutual information between the input and output of a DNN model to estimate the success probability of  MIA attacks on the model.}}

\subsubsection{CIFAR-10: } We study 4 DNN models for the CIFAR-10 data set with mutual information decreasing from $2.01$ to $1.17$ nats. Using three different variants of a contemporary membership inference attack~\cite{nasr2019comprehensive} with $3$, $5$ and $7$ shadow models, the probability of attacks decreases from $0.82$ to $0.59$, $0.82$ to $0.62$, and $0.83$ to $0.65$, for the three attacks respectively. A decrease in mutual information is coupled with a decrease in the success probability of the MIA model. The Pearson correlation between the mutual information and the attack probability for CIFAR-10 is $0.966$.

\subsubsection{GTSRB: } The GTSRB data is also used to train four different neural network models with mutual information decreasing from $2.29$ to $1.21$. As shown in Fig.~\ref{fig:miprob}, the success probability of the most powerful MIA model falls from $0.98$ to $0.46$.

\subsubsection{SVHN: } Similar reduction in the success of MIA models is observed on the SVHN data set. As the mutual information falls from $1.09$ to $0.40$, the probability of success of the most successful MIA model falls from $0.98$ to $0.47$. 

    We find that the Pearson correlations between mutual information and success probability of a contemporary MIA attack~\cite{nasr2019comprehensive} are $0.966$, $0.996$ and $0.955$ for neural networks trained on the CIFAR-10, GTSRB and SVHN data sets, respectively. The strongly positive Pearson's correlation across data sets confirms our theoretical finding that mutual information is related to the success probability of MIA models.

\subsection{Broader Applicability of Our Lower Bound}
While Theorem~\ref{mainthem} enables a theoretical understanding of the relationship between mutual information $\mi{\inp{};\out{}}$, in this section, we investigate an orthogonal question: {\it {when does Theorem~\ref{mainthem} produce positive lower bounds on $p_{\alpha}$? }}

 Figure~\ref{fig:HDY} (left) shows a plot of a threshold on the ratio of the conditional entropy $\en{\inp{}|\out{}}$ and the size of the data set $|\ds{}|$ such that conditional entropy values higher than this approximate threshold are required for a positive lower bounds for $p_{\alpha}$ in Theorem~\ref{mainthem}. We can verify that the results agree with our intuition for various values of the ratio of the number of errors $\alpha$ to the size of the data set $|\ds{}|$. For example, if we are only interested in small number of errors $\alpha < \frac{|\ds{}|}{4}$, our lower bounds on $p_{\alpha}$ are positive when  $\en{\inp{}|\out{}} >  \frac{|\ds{}|}{2}$ i.e. the conditional entropy $\en{\inp{}|\out{}}$ is comparable to at least half the size of the data set $|\ds{}|$.

On the other hand, as the size of the data set increases and the number of errors becomes large e.g. $\alpha/|\ds{}|\approx 0.5$, the curves corresponding to the threshold show that the conditional entropy $\en{\inp{}|\out{}}$ needs to become as large as about $71\%$ of $|\ds{}|$ for our lower bound to produce a positive result. This again makes intuitive sense as the conditional entropy must be high in order for even the best membership inference attack to suffer a large number of errors.

The bound in Theorem~\ref{mainthem} can also be stated as $p_{\alpha} \geq 1 - \frac{1+1/|\ds{}|-c}{1-z(\alpha)}$, where $c=\frac{\en{\inp{}|\out{}}}{|\ds{}|}$ and $z(\alpha)=\frac{\binomialseries{}}{|\ds{}|}$. Figure~\ref{fig:HDY} (right) shows how the value of $z(\alpha)$ required for a positive lower bound changes with the ratio $\alpha/|\ds{}|$ in one setting. 

In summary, our lower bound on $p_{\alpha}$ is useful in a large non-degenerate regime where the conditional entropy $\en{\inp{}|{\out{}}}$ is not too low when compared to the size of the data set $|\mathcal{D}|$. If the conditional entropy $\en{\inp{}|{\out{}}}$ is too low, our bound is not positive and this ties well with our intuition that a good adversary can launch embarrassingly successful membership inference attacks in this setting.

\section{Conclusions and Future Work}

 Fano's inequality is a classical information theoretic result that relates the probability of an error in a channel with the conditional entropy between the input and output of a noisy channel. We present a new extension to Fano's inequality~\cite{fano1961transmission} that  establishes a bound on the success probability of a membership inference attack using mutual information between the inputs and the outputs/activations of a DNN model.  We mathematically prove that our mutual information based bound can measure a DNN model's susceptibility to any membership attack. 
 
 In our empirical evaluation, the correlation between the mutual information and the susceptibility of the DNN model to membership inference attacks is 0.966, 0.996, and 0.955 for CIFAR-10, SVHN and GTSRB, respectively. Thus, we address the challenge of making DNNs less susceptible to membership inference attacks and reduce the risk of inadvertent leak of information about training data.

Several directions for future research remain open. While this paper focuses on the use of mutual information as a susceptibility metric another interesting line of research may focus on computing $p_{\alpha}$ directly as a metric of susceptibility to membership inference attacks. Since mutual information $\mi{\inp{};\out{}}$ is the only term in the lower bound of Theorem~\ref{mainthem} that arises from the design and training of the neural network, we have chosen to focus on mutual information as a susceptibility metric. 

Because of recent advances in neural network based estimation of mutual information, our results on $\mi{\inp{};\out{}}$ as a metric can be used to create an effective regularization approach for training neural networks that are more robust against membership inference attacks.

Another interesting direction of research is a deeper understanding of the tightness of our bound based on mutual information. While we have presented experimental evidence on three different data sets to show that mutual information is a good metric for measuring model susceptibility to membership inference attacks, a theoretical investigation into the tightness of the bound may lead to deeper insights.

\newpage
\section*{Ethical and Broader Impact}

There is an emerging trend of providing DNN models to users either directly or through cloud services, where the model has been trained on proprietary or private data. The recently proposed membership inference attacks show that the user of the model can infer whether a training data was used in a model or not. 
MIA attacks  violate the expected privacy of  the individual participants contributing to the training data, and  cause unauthorized leakage of the training dataset which could be of business value or even a trade secret. For example, membership in the training data set of a model associated with a disease or addiction can reveal otherwise private information about a patient.  As yet another example, consider an anomaly detection DNN model for an engine made available to customers by the engine manufacturer, the discovery of training data employed for anomaly detection could leak crucial proprietary information.
These concerns create a hurdle to the broader adoption of DNN models. 

We address this socially important challenge in the paper. We present a way to analyze a machine learning model to understand its susceptibility to membership inference attack using mutual information between the inputs and the outputs of the model.  Our approach will make machine learning models more robust and privacy-aware, and thus, be of positive impact to society.  %
\bibliography{main}

\end{document}